\theoremstyle{definition}
\newtheorem{notation}{Notation}[section]
\newtheorem{definition}{Definition}[section]
\newtheorem{prop}{Proposition}[section]
\DeclareFixedFont{\ttb}{T1}{txtt}{bx}{n}{9} % for bold
\DeclareFixedFont{\ttm}{T1}{txtt}{m}{n}{9}  % for normal
\definecolor{darkblue}{rgb}{0,0,.6}
\definecolor{darkred}{rgb}{.7,0,0}
\definecolor{darkgreen}{rgb}{0,.6,0}
\definecolor{red}{rgb}{.98,0,0}
\definecolor{deepblue}{rgb}{0,0,0.5}
\definecolor{deepred}{rgb}{0.6,0,0}
\definecolor{deepgreen}{rgb}{0,0.5,0}
\newcommand\pythonstyle{\lstset{
language=Python,
basicstyle=\ttm,
morekeywords={self},              % Add keywords here
keywordstyle=\ttb\color{deepblue},
emph={MyClass,__init__},          % Custom highlighting
emphstyle=\ttb\color{deepred},    % Custom highlighting style
stringstyle=\color{deepgreen},
frame=tb,                         % Any extra options here
showstringspaces=false
}}
\newcommand\pythoninline[1]{{\pythonstyle\lstinline!#1!}}
\begin{document}
\title{Testing for Overfitting}
\author{James Schmidt\footnote{email: aschmi40@jhu.edu}\\
\small Johns Hopkins University Applied Physics Laboratory\\ 
\small Johns Hopkins University}
\date{\today}
\maketitle
\abstract{High complexity models are notorious in machine learning for overfitting, a phenomenon in which models well represent data but fail to generalize  an underlying data generating process. A typical  procedure for  circumventing overfitting computes empirical risk on a holdout set and halts once (or flags that/when) it begins to increase. Such practice often helps in outputting a well-generalizing model, but justification for why it works is usually heuristic. 

%Learning guarantees circumvent the limits of loose intuition, but provide stronger results than are often times necessary, namely that a model trained on data from \textit{this} distribution generalizes \textit{this} distribution. 

 We discuss the overfitting problem and explain why standard asymptotic and concentration results do not hold for evaluation with training data. We then proceed to introduce and argue for a  hypothesis test by means of which both model performance may be evaluated using training data, and overfitting quantitatively defined and detected. We rely on  said concentration bounds which guarantee that empirical means  should, with high probability, approximate their true mean to conclude that they should approximate each other. We stipulate conditions under which this test is valid,  explain why training data alone is unsuitable for generalization determination, describe how the test may be used for identifying overfitting, and articulate a further nuance with respect to which this test may flag distribution shift.}
\newcommand{\nat}[6][large]{%
  \begin{tikzcd}[ampersand replacement = \&, column sep=#1]
    #2\ar[bend left=40,""{name=U}]{r}{#4}\ar[bend right=40,',""{name=D}]{r}{#5}\& #3
          \ar[shorten <=10pt,shorten >=10pt,Rightarrow,from=U,to=D]{d}{~#6}
    \end{tikzcd}
}
\newcommand{\invamalg}{\mathbin{\text{\rotatebox[origin=c]{180}{$\amalg$}}}}

\newcommand{\dCrl}[0]{\mathfrak{dCrl}}
\newcommand{\ytil}{\tilde{y}}
\newcommand{\defeq}{\vcentcolon=}
%Basics
\newcommand{\dee}{\partial}
\newcommand{\lb}{\{}
\newcommand{\rb}{\}}
\newcommand{\R}{\mathbb{R}}
\newcommand{\C}{\mathbb{C}}
\newcommand{\Q}{\mathbb{Q}}
\newcommand{\N}{\mathbb{N}}
\newcommand{\el}{\mathcal{L}}
\newcommand{\pdiv}[2]{\frac{\partial{#1}}{\partial{#2}}}
\newcommand{\discatp}{\displaystyle\bigsqcap}
\newcommand{\discats}{\displaystyle\bigsqcup}

%Sheafs
\newcommand{\uZ}{\underline{\mathbb{Z}}}
\newcommand{\uF}[1]{\underline{\mathbb{F}}}
\newcommand{\one}{\mathbb{1}}
\newcommand{\two}{\mathbb{2}}
%\def\one{\mbox{1\hspace{-4.25pt}\fontsize{12}{14.4}\selectfont\textrm{1}}}
%\def\two{{\mbox{2\hspace{-4.25pt}\fontsize{12}{14.4}\selectfont\textrm{2}}}}

%Displaystyle Stuff
\newcommand{\dis}{\displaystyle}
\newcommand{\disp}{\displaystyle\prod}
\newcommand{\disu}{\displaystyle\bigcup}
\newcommand{\disi}{\displaystyle\bigcap}
\newcommand{\diss}{\displaystyle\sum}
\newcommand{\disg}{\displaystyle\int}
\newcommand{\disl}{\displaystyle\lim}
\newcommand{\dislim}{\displaystyle\lim}
\newcommand{\disliminf}{\displaystyle\liminf}
\newcommand{\dislimsup}{\displaystyle\limsup}
\newcommand{\disbop}{\displaystyle\bigotimes}
\newcommand{\disbos}{\displaystyle\bigoplus}
\newcommand{\dissup}{\displaystyle\sup}
\newcommand{\disinf}{\displaystyle\inf}
\newcommand{\dismax}{\displaystyle\max}
\newcommand{\dismin}{\displaystyle\min}
\newcommand{\dirlim}[1]{\displaystyle\varinjlim_{#1}}
\newcommand{\indlim}[1]{\displaystyle\varprojlim_{#1}}
\newcommand{\discatlim}{\indlim}
\newcommand{\discatcolim}{\dirlim}
\newcommand{\catcolim}{\mbox{colim}}
\newcommand{\catlim}{\mbox{lim}}

\newcommand{\colgray}[1]{\color{gray}{#1}\color{black}}

\newcommand{\sfM}{\sF{M}}

\newcommand{\forget}[2]{\Ub^{#1}_{#2}}

%Arrows/Category stuff
\newcommand\righttwoarrow{%
        \mathrel{\vcenter{\mathsurround0pt
                \ialign{##\crcr
                        \noalign{\nointerlineskip}$\rightarrow$\crcr
                        \noalign{\nointerlineskip}$\rightarrow$\crcr
                      %  \noalign{\nointerlineskip}$\rightarrow$\crcr
                }%
        }}%
}

%Math BB
\newcommand{\Z}{\mathbb{Z}}
\newcommand{\Ab}[0]{\mathbb{A}}
\newcommand{\Bb}[0]{\mathbb{B}}
\newcommand{\Cb}[0]{\mathbb{C}}
\newcommand{\Db}[0]{\mathbb{D}}
\newcommand{\Eb}[0]{\mathbb{E}}
\newcommand{\Fb}[0]{\mathbb{F}}
\newcommand{\Gb}[0]{\mathbb{G}}
\newcommand{\Hb}[0]{\mathbb{H}}
\newcommand{\Ib}[0]{\mathbb{I}}
\newcommand{\Jb}[0]{\mathbb{J}}
\newcommand{\Kb}[0]{\mathbb{K}}
\newcommand{\Lb}[0]{\mathbb{L}}
\newcommand{\Mb}[0]{\mathbb{M}}
\newcommand{\Nb}[0]{\mathbb{N}}
\newcommand{\Ob}[0]{\mathbb{O}}
\newcommand{\Pb}[0]{\mathbb{P}}
\newcommand{\Qb}[0]{\mathbb{Q}}
\newcommand{\Rb}[0]{\mathbb{R}}
\newcommand{\Sb}[0]{\mathbb{S}}
\newcommand{\Tb}[0]{\mathbb{T}}
\newcommand{\Ub}[0]{\mathbb{U}}
\newcommand{\Vb}[0]{\mathbb{V}}
\newcommand{\Wb}[0]{\mathbb{W}}
\newcommand{\Xb}[0]{\mathbb{X}}
\newcommand{\Yb}[0]{\mathcal{Y}}
\newcommand{\Zb}[0]{\mathbb{Z}}

%% Math SF
\newcommand{\sF}[1]{\mathsf{#1}}

%% Math Script
\newcommand{\sC}[1]{\mathscr{#1}}

%% Math Cal
\newcommand{\mC}[1]{\mathcal{#1}}

%% Math BB
\newcommand{\mB}[1]{\mathbb{#1}}

%%Math Frak
\newcommand{\mF}[1]{\mathfrak{#1}}

%% Math Cal 
%\newcommand{\Ac}[0]{\mathcal{A}}
\newcommand{\Bc}[0]{\mathcal{B}}
\newcommand{\Cc}[0]{\mathcal{C}}
\newcommand{\Dc}[0]{\mathcal{D}}
\newcommand{\Ec}[0]{\mathcal{E}}
\newcommand{\Fc}[0]{\mathcal{F}}
\newcommand{\Gc}[0]{\mathcal{G}}
\newcommand{\Hc}[0]{\mathcal{H}}
\newcommand{\Ic}[0]{\mathcal{I}}
\newcommand{\Jc}[0]{\mathcal{J}}
\newcommand{\Kc}[0]{\mathcal{K}}
\newcommand{\Lc}[0]{\mathcal{L}}
\newcommand{\Mc}[0]{\mathcal{M}}
\newcommand{\Nc}[0]{\mathcal{N}}
\newcommand{\Oc}[0]{\mathcal{O}}
\newcommand{\Pc}[0]{\mathcal{P}}
\newcommand{\Qc}[0]{\mathcal{Q}}
\newcommand{\Rc}[0]{\mathcal{R}}
\newcommand{\Sc}[0]{\mathcal{S}}
\newcommand{\Tc}[0]{\mathcal{T}}
\newcommand{\Uc}[0]{\mathcal{U}}
\newcommand{\Vc}[0]{\mathcal{V}}
\newcommand{\Wc}[0]{\mathcal{W}}
\newcommand{\Xc}[0]{\mathcal{X}}
\newcommand{\Yc}[0]{\mathcal{Y}}
\newcommand{\Zc}[0]{\mathcal{Z}}

\newcommand{\aca}[0]{\mathcal{a}}
\newcommand{\bca}[0]{\mathcal{b}}
\newcommand{\cca}[0]{\mathcal{c}}
\newcommand{\dca}[0]{\mathcal{d}}
\newcommand{\eca}[0]{\mathcal{e}}
\newcommand{\fca}[0]{\mathcal{f}}
\newcommand{\gca}[0]{\mathcal{g}}
\newcommand{\hca}[0]{\mathcal{h}}
\newcommand{\ica}[0]{\mathcal{i}}
\newcommand{\jca}[0]{\mathcal{j}}
\newcommand{\kca}[0]{\mathcal{k}}
\newcommand{\lca}[0]{\mathcal{l}}
\newcommand{\mca}[0]{\mathcal{m}}
\newcommand{\nca}[0]{\mathcal{n}}
\newcommand{\oca}[0]{\mathcal{o}}
\newcommand{\pca}[0]{\mathcal{p}}
\newcommand{\qca}[0]{\mathcal{q}}
\newcommand{\rca}[0]{\mathcal{r}}
\newcommand{\sca}[0]{\mathcal{s}}
\newcommand{\tca}[0]{\mathcal{t}}
\newcommand{\uca}[0]{\mathcal{u}}
\newcommand{\vca}[0]{\mathcal{v}}
\newcommand{\wca}[0]{\mathcal{w}}
\newcommand{\xca}[0]{\mathcal{x}}
\newcommand{\yca}[0]{\mathcal{y}}
\newcommand{\zca}[0]{\mathcal{z}}

\newcommand{\Af}[0]{\mathfrak{A}}
\newcommand{\Bf}[0]{\mathfrak{B}}
\newcommand{\Cf}[0]{\mathfrak{C}}
\newcommand{\Df}[0]{\mathfrak{D}}
\newcommand{\Ef}[0]{\mathfrak{E}}
\newcommand{\Ff}[0]{\mathfrak{F}}
\newcommand{\Gf}[0]{\mathfrak{G}}
\newcommand{\Hf}[0]{\mathfrak{H}}
\newcommand{\If}[0]{\mathfrak{I}}
\newcommand{\Jf}[0]{\mathfrak{J}}
\newcommand{\Kf}[0]{\mathfrak{K}}
\newcommand{\Lf}[0]{\mathfrak{L}}
\newcommand{\Mf}[0]{\mathfrak{M}}
\newcommand{\Nf}[0]{\mathfrak{N}}
\newcommand{\Of}[0]{\mathfrak{O}}
\newcommand{\Pf}[0]{\mathfrak{P}}
\newcommand{\Qf}[0]{\mathfrak{Q}}
\newcommand{\Rf}[0]{\mathfrak{R}}
\newcommand{\Sf}[0]{\mathfrak{S}}
\newcommand{\Tf}[0]{\mathfrak{T}}
\newcommand{\Uf}[0]{\mathfrak{U}}
\newcommand{\Vf}[0]{\mathfrak{V}}
\newcommand{\Wf}[0]{\mathfrak{W}}
\newcommand{\Xf}[0]{\mathfrak{X}}
\newcommand{\Yf}[0]{\mathfrak{Y}}
\newcommand{\Zf}[0]{\mathfrak{Z}}

\newcommand{\af}[0]{\mathfrak{a}}
\newcommand{\bff}[0]{\mathfrak{b}}
\newcommand{\cf}[0]{\mathfrak{c}}
\newcommand{\dff}[0]{\mathfrak{d}}
\newcommand{\ef}[0]{\mathfrak{e}}
\newcommand{\ff}[0]{\mathfrak{f}}
\newcommand{\gf}[0]{\mathfrak{g}}
\newcommand{\hf}[0]{\mathfrak{h}}
\newcommand{\ifrak}{\mathfrak{i}}
\newcommand{\jf}[0]{\mathfrak{j}}
\newcommand{\kf}[0]{\mathfrak{k}}
\newcommand{\lf}[0]{\mathfrak{l}}
\newcommand{\mf}[0]{\mathfrak{m}}
\newcommand{\nf}[0]{\mathfrak{n}}
\newcommand{\of}[0]{\mathfrak{o}}
\newcommand{\pf}[0]{\mathfrak{p}}
\newcommand{\qf}[0]{\mathfrak{q}}
\newcommand{\rf}[0]{\mathfrak{r}}
\renewcommand{\sf}[0]{\mathfrak{s}}
\newcommand{\tf}[0]{\mathfrak{t}}
\newcommand{\uf}[0]{\mathfrak{u}}
\newcommand{\vf}[0]{\mathfrak{v}}
\newcommand{\wf}[0]{\mathfrak{w}}
\newcommand{\xf}[0]{\mathfrak{x}}
\newcommand{\yf}[0]{\mathfrak{y}}
\newcommand{\zf}[0]{\mathfrak{z}}
\newcommand{\cdX}{\mathfrak{cdX}}
\newcommand{\cdCrl}{\mathfrak{cdCrl}}

\newcommand{\scA}[0]{\mathscr{A}}
\newcommand{\scB}[0]{\mathscr{B}}
\newcommand{\scC}[0]{\mathscr{C}}
\newcommand{\scD}[0]{\mathscr{D}}
\newcommand{\scE}[0]{\mathscr{E}}
\newcommand{\scF}[0]{\mathscr{F}}
\newcommand{\scG}[0]{\mathscr{G}}
\newcommand{\scH}[0]{\mathscr{H}}
\newcommand{\scI}[0]{\mathscr{I}}
\newcommand{\scJ}[0]{\mathscr{J}}
\newcommand{\scK}[0]{\mathscr{K}}
\newcommand{\scL}[0]{\mathscr{L}}
\newcommand{\scM}[0]{\mathscr{M}}
\newcommand{\scN}[0]{\mathscr{N}}
\newcommand{\scO}[0]{\mathscr{O}}
\newcommand{\scP}[0]{\mathscr{P}}
\newcommand{\scQ}[0]{\mathscr{Q}}
\newcommand{\scR}[0]{\mathscr{R}}
\newcommand{\scS}[0]{\mathscr{S}}
\newcommand{\scT}[0]{\mathscr{T}}
\newcommand{\scU}[0]{\mathscr{U}}
\newcommand{\scV}[0]{\mathscr{V}}
\newcommand{\scW}[0]{\mathscr{W}}
\newcommand{\scX}[0]{\mathscr{X}}
\newcommand{\scY}[0]{\mathscr{Y}}
\newcommand{\scZ}[0]{\mathscr{Z}}

\newcommand{\fA}[0]{\mathsf{A}}
\newcommand{\fB}[0]{\mathsf{B}}
\newcommand{\fC}[0]{\mathsf{C}}
\newcommand{\fD}[0]{\mathsf{D}}
\newcommand{\fE}[0]{\mathsf{E}}
\newcommand{\fG}[0]{\mathsf{G}}
\newcommand{\fH}[0]{\mathsf{H}}
\newcommand{\fI}[0]{\mathsf{I}}
\newcommand{\fJ}[0]{\mathsf{J}}
\newcommand{\fK}[0]{\mathsf{K}}
\newcommand{\fL}[0]{\mathsf{L}}
\newcommand{\fM}[0]{\mathsf{M}}
\newcommand{\fN}[0]{\mathsf{N}}
\newcommand{\fO}[0]{\mathsf{O}}
\newcommand{\fP}[0]{\mathsf{P}}
\newcommand{\fQ}[0]{\mathsf{Q}}
\newcommand{\fR}[0]{\mathsf{R}}
\newcommand{\fS}[0]{\mathsf{S}}
\newcommand{\fT}[0]{\mathsf{T}}
\newcommand{\fU}[0]{\mathsf{U}}
\newcommand{\fV}[0]{\mathsf{V}}
\newcommand{\fW}[0]{\mathsf{W}}
\newcommand{\fX}[0]{\mathsf{X}}
\newcommand{\fY}[0]{\mathsf{Y}}
\newcommand{\fZ}[0]{\mathsf{Z}}

\newcommand{\fa }[0]{\mathsf{a}}
\newcommand{\fb }[0]{\mathsf{b}}
\newcommand{\fc }[0]{\mathsf{c}}
\newcommand{\fd}[0]{\mathsf{d}}
\newcommand{\fe}[0]{\mathsf{e}}
\newcommand{\fg}[0]{\mathsf{g}}
\newcommand{\fh}[0]{\mathsf{h}}
\newcommand{\fj}[0]{\mathsf{j}}
\newcommand{\fk}[0]{\mathsf{k}}
\newcommand{\fl }[0]{\mathsf{l}}
\newcommand{\fm }[0]{\mathsf{m}}
\newcommand{\fn }[0]{\mathsf{n}}
\newcommand{\fo }[0]{\mathsf{o}}
\newcommand{\fp}[0]{\mathsf{p}}
\newcommand{\fq}[0]{\mathsf{q}}
\newcommand{\fr}[0]{\mathsf{r}}
\newcommand{\fs}[0]{\mathsf{s}}
\newcommand{\ft }[0]{\mathsf{t}}
\newcommand{\fu }[0]{\mathsf{u}}
\newcommand{\fv }[0]{\mathsf{v}}
\newcommand{\fw}[0]{\mathsf{w}}
\newcommand{\fx}[0]{\mathsf{x}}
\newcommand{\fy}[0]{\mathsf{y}}
\newcommand{\fz}[0]{\mathsf{z}}

\newcommand{\dA}[0]{\dot{A}}
\newcommand{\dB}[0]{\dot{B}}
\newcommand{\dC}[0]{\dot{C}}
\newcommand{\dD}[0]{\dot{D}}
\newcommand{\dE}[0]{\dot{E}}
\newcommand{\dF}[0]{\dot{F}}
\newcommand{\dG}[0]{\dot{G}}
\newcommand{\dH}[0]{\dot{H}}
\newcommand{\dI}[0]{\dot{I}}
\newcommand{\dJ}[0]{\dot{J}}
\newcommand{\dK}[0]{\dot{K}}
\newcommand{\dL}[0]{\dot{L}}
\newcommand{\dM}[0]{\dot{M}}
\newcommand{\dN}[0]{\dot{N}}
\newcommand{\dO}[0]{\dot{O}}
\newcommand{\dP}[0]{\dot{P}}
\newcommand{\dQ}[0]{\dot{Q}}
\newcommand{\dR}[0]{\dot{R}}
\newcommand{\dS}[0]{\dot{S}}
\newcommand{\dT}[0]{\dot{T}}
\newcommand{\dU}[0]{\dot{U}}
\newcommand{\dV}[0]{\dot{V}}
\newcommand{\dW}[0]{\dot{W}}
\newcommand{\dX}[0]{\dot{X}}
\newcommand{\dY}[0]{\dot{Y}}
\newcommand{\dZ}[0]{\dot{Z}}

\newcommand{\da}[0]{\dot{a}}
\newcommand{\db}[0]{\dot{b}}
\newcommand{\dc}[0]{\dot{c}}
\newcommand{\dd}[0]{\dot{d}}
\newcommand{\de}[0]{\dot{e}}
\newcommand{\df}[0]{\dot{f}}
\newcommand{\dg}[0]{\dot{g}}
\renewcommand{\dh}[0]{\dot{h}}
\newcommand{\di}[0]{\dot{i}}
\renewcommand{\dj}[0]{\dot{j}}
\newcommand{\dk}[0]{\dot{k}}
\newcommand{\dl}[0]{\dot{l}}
\newcommand{\dm}[0]{\dot{m}}
\newcommand{\dn}[0]{\dot{n}}
\newcommand{\dq}[0]{\dot{q}}
\newcommand{\dr}[0]{\dot{r}}
\newcommand{\ds}[0]{\dot{s}}
\newcommand{\dt}[0]{\dot{t}}
\newcommand{\du}[0]{\dot{u}}
\newcommand{\dv}[0]{\dot{v}}
\newcommand{\dw}[0]{\dot{w}}
\newcommand{\dx}[0]{\dot{x}}
\newcommand{\dy}[0]{\dot{y}}
\newcommand{\dz}[0]{\dot{z}}

\newcommand{\oA}[0]{\overline{A}}
\newcommand{\oB}[0]{\overline{B}}
\newcommand{\oC}[0]{\overline{C}}
\newcommand{\oD}[0]{\overline{D}}
\newcommand{\oE}[0]{\overline{E}}
\newcommand{\oF}[0]{\overline{F}}
\newcommand{\oG}[0]{\overline{G}}
\newcommand{\oH}[0]{\overline{H}}
\newcommand{\oI}[0]{\overline{I}}
\newcommand{\oJ}[0]{\overline{J}}
\newcommand{\oK}[0]{\overline{K}}
\newcommand{\oL}[0]{\overline{L}}
\newcommand{\oM}[0]{\overline{M}}
\newcommand{\oN}[0]{\overline{N}}
\newcommand{\oO}[0]{\overline{O}}
\newcommand{\oP}[0]{\overline{P}}
\newcommand{\oQ}[0]{\overline{Q}}
\newcommand{\oR}[0]{\overline{R}}
\newcommand{\oS}[0]{\overline{S}}
\newcommand{\oT}[0]{\overline{T}}
\newcommand{\oU}[0]{\overline{U}}
\newcommand{\oV}[0]{\overline{V}}
\newcommand{\oW}[0]{\overline{W}}
\newcommand{\oX}[0]{\overline{X}}
\newcommand{\oY}[0]{\overline{Y}}
\newcommand{\oZ}[0]{\overline{Z}}

\newcommand{\oa}[0]{\overline{a}}
\newcommand{\ob}[0]{\overline{b}}
\newcommand{\oc}[0]{\overline{c}}
\newcommand{\od}[0]{\overline{d}}
\renewcommand{\oe}[0]{\overline{e}}
\newcommand{\og}[0]{\overline{g}}
\newcommand{\oh}[0]{\overline{h}}
\newcommand{\oi}[0]{\overline{i}}
\newcommand{\oj}[0]{\overline{j}}
\newcommand{\ok}[0]{\overline{k}}
\newcommand{\ol}[0]{\overline{l}}
\newcommand{\om}[0]{\overline{m}}
\newcommand{\on}[0]{\overline{n}}
\newcommand{\oo}[0]{\overline{o}}
\newcommand{\op}[0]{\overline{p}}
\newcommand{\oq}[0]{\overline{q}}
\newcommand{\os}[0]{\overline{s}}
\newcommand{\ot}[0]{\overline{t}}
\newcommand{\ou}[0]{\overline{u}}
\newcommand{\ov}[0]{\overline{v}}
\newcommand{\ow}[0]{\overline{w}}
\newcommand{\ox}[0]{\overline{x}}
\newcommand{\oy}[0]{\overline{y}}
\newcommand{\oz}[0]{\overline{z}}

\renewcommand{\a}{\alpha}
\renewcommand{\b}{\beta}
\renewcommand{\d}{\delta}
\newcommand{\e}{\varepsilon}
\newcommand{\f}{\phi}
\newcommand{\g}{\gamma}
\newcommand{\h}{\eta}
\renewcommand{\i}{\iota}
\renewcommand{\k}{\kappa}
\renewcommand{\l}{\lambda}
\newcommand{\m}{\mu}
\newcommand{\n}{\nu}
\newcommand{\p}{\pi}
\newcommand{\ph}{\varphi}
\newcommand{\ps}{\psi}
\newcommand{\q}{\xi}
\renewcommand{\r}{\rho}
\newcommand{\s}{\sigma}
\renewcommand{\t}{\tau}
\renewcommand{\v}{\upsilon}
\newcommand{\x}{\chi}
\newcommand{\z}{\zeta}
\newcommand{\G}{\Gamma}

\newcommand{\aarb}[0]{\<a>}
\newcommand{\barb}[0]{\<b>}
\newcommand{\carb}[0]{\<c>}
\newcommand{\darb}[0]{\<d>}
\newcommand{\earb}[0]{\<e>}
\newcommand{\farb}[0]{\<f>}
\newcommand{\garb}[0]{\<g>}
\newcommand{\harb}[0]{\<h>}
\newcommand{\iarb}[0]{\<i>}
\newcommand{\jarb}[0]{\<j>}
\newcommand{\karb}[0]{\<k>}
\newcommand{\larb}[0]{\<l>}
\newcommand{\marb}[0]{\<m>}
\newcommand{\narb}[0]{\<n>}
\newcommand{\oarb}[0]{\<o>}
\newcommand{\parb}[0]{\<p>}
\newcommand{\qarb}[0]{\<q>}
\newcommand{\rarb}[0]{\<r>}
\newcommand{\sarb}[0]{\<s>}
\newcommand{\tarb}[0]{\<t>}
\newcommand{\uarb}[0]{\<u>}
\newcommand{\varb}[0]{\<v>}
\newcommand{\warb}[0]{\<w>}
\newcommand{\xarb}[0]{\<x>}
\newcommand{\yarb}[0]{\<y>}
\newcommand{\zarb}[0]{\<z>}

\newcommand{\hA}[0]{\hat{A}}
\newcommand{\hB}[0]{\hat{B}}
\newcommand{\hC}[0]{\hat{C}}
\newcommand{\hD}[0]{\hat{D}}
\newcommand{\hE}[0]{\hat{E}}
\newcommand{\hF}[0]{\hat{F}}
\newcommand{\hG}[0]{\hat{G}}
\newcommand{\hH}[0]{\hat{H}}
\newcommand{\hI}[0]{\hat{I}}
\newcommand{\hJ}[0]{\hat{J}}
\newcommand{\hK}[0]{\hat{K}}
\newcommand{\hL}[0]{\hat{L}}
\newcommand{\hM}[0]{\hat{M}}
\newcommand{\hN}[0]{\hat{N}}
\newcommand{\hO}[0]{\hat{O}}
\newcommand{\hP}[0]{\hat{P}}
\newcommand{\hQ}[0]{\hat{Q}}
\newcommand{\hR}[0]{\hat{R}}
\newcommand{\hS}[0]{\hat{S}}
\newcommand{\hT}[0]{\hat{T}}
\newcommand{\hU}[0]{\hat{U}}
\newcommand{\hV}[0]{\hat{V}}
\newcommand{\hW}[0]{\hat{W}}
\newcommand{\hX}[0]{\hat{X}}
\newcommand{\hY}[0]{\hat{Y}}
\newcommand{\hZ}[0]{\hat{Z}}

\newcommand{\ha}[0]{\hat{a}}
\newcommand{\hb}[0]{\hat{b}}
\newcommand{\hc}[0]{\hat{c}}
\newcommand{\hd}[0]{\hat{d}}
\newcommand{\he}[0]{\hat{e}}
\newcommand{\hg}[0]{\hat{g}}
\newcommand{\hh}[0]{\hat{h}}
\newcommand{\hi}[0]{\hat{i}}
\newcommand{\hj}[0]{\hat{j}}
\newcommand{\hk}[0]{\hat{k}}
\newcommand{\hl}[0]{\hat{l}}
\newcommand{\hm}[0]{\hat{m}}
\newcommand{\hn}[0]{\hat{n}}
\newcommand{\ho}[0]{\hat{o}}
\newcommand{\hp}[0]{\hat{p}}
\newcommand{\hq}[0]{\hat{q}}
\newcommand{\hr}[0]{\hat{r}}
\newcommand{\hs}[0]{\hat{s}}
\newcommand{\hu}[0]{\hat{u}}
\newcommand{\hv}[0]{\hat{v}}
\newcommand{\hw}[0]{\hat{w}}
\newcommand{\hx}[0]{\hat{x}}
\newcommand{\hy}[0]{\hat{y}}
\newcommand{\hz}[0]{\hat{z}}

%Hybrid System Stuff
\newcommand{\hyph}[2]{\Fc_{#1}:#2\rightarrow \sF{RelMan^c}}
\newcommand{\repsys}{(\Fc_{N^\scT}:\N\rightarrow\sF{RelMan^c},\frac{d}{dt})}
\newcommand{\truerep}[1]{\left(\Fc_{N^\scT}{#1}:\N\rightarrow\sF{RelMan^c},\frac{D}{Dt}\right)}

\section{Introduction}

Supervised machine learning is  severely underdetermined: a finite labeled data set is used to search a function space for an appropriate model fitting both the data and ``from where the data comes.'' While the full function space is often at least two infinite orders of magnitude greater than the data, practitioners typically restrict search to a hypothesis class that is parametrized as a finite dimensional space. If this hypothesis class is too restricted, the search may output a model which fails to represent or approximate the data well enough; if, on the other hand, the class is too rich, the output model may represent the data \textit{too} well, in that the model fails to represent the underlying distribution from which data is drawn. Generally, this tradeoff between \textit{underfitting} and \textit{overfitting}, respectively, is asymmetric: a model which fits data  may (and hopefully does) still  generalize to the underlying distribution, while a model which underfits data usually does not fit the distribution. Stated differently, underfitting is \textit{detectable} in the course of performance evaluation while overfitting cannot be identified by performance on the training data alone (\cite{ghojogh}).

To mitigate the aspect blindness of training data performance to overfitting,  standard practice sets aside a holdout set disjoint from training and computes performance separately. Thus, training a model ordinarily incorporates two distinct steps: 1.\ optimization with training data to fit (model to) data and 2.\ verification of generalization by evaluating performance on holdout data.  While vague heuristics motivating this two-step procedure abound in the literature and research community, rigorous statistical rationale less ubiquitously accompany justification of its use.   Moreover, this two-step process facially treats training data and holdout data as altogether different kinds of things, with different tasks and different intended uses. As such, separating the conclusions we draw from training data and holdout data threatens to undermine the  original impetus according to which training data is used for training in the first place, namely \textit{that} optimization with respect to training data should \textit{thereby} optimize an expectation (generalization). We explain the reasons for this paradox, and propose a solution that translates into a statistical test which may be deployed for both defining and identifying overfitting, using modified Law of Large Numbers (LLN) intuition that empirical means should approximate their expectation.

 In \cref{section:background}, we review requisite background for the supervised learning problem, discuss the problem with training data, how it relates to overfitting, and why we would still like to use model performance on training data to asses generalization. In \cref{section:theTest}, we detail the statistical test for achieving this end,  and give commentary on how this test clarifies the  meaning of overfitting. We point out how the test validates generalization even absent strong but restrictive (e.g.\ PAC) learnability guarantees.   
\section{Technical Background}\label{section:background}
\subsection{Supervised Machine Learning} 
The setting for a supervised machine learning problem starts with the following data: 
\begin{enumerate}
	\item a joint probability space $(\Xc\times\Yc,\Pb_{\Xc\times\Yc})$,\footnote{We leave implicit the $\sigma$-algebra of measurable sets and suppose that anything we try measuring is indeed $\Pb$-measurable.}
	\item labeled data $\sF{S} = \big((x_1,y_1),\ldots,(x_m,y_m)\big)\in (\Xc\times\Yc)^\omega\defeq \disu_{m\in \N} (\Xc\times\Yc)^m$, 
	\item a hypothesis class $\Hc \subset \Yc^\Xc$ of functions $\tilde{y}:\Xc\rightarrow \Yc$,\footnote{The notation $\Yc^\Xc$ denotes the \textit{set} $\big\{\tilde{y}:\Xc\rightarrow \Yc\big\}$ of unstructured functions with domain $\Xc$ and codomain $\Yc$. Of course, we require $\Hc$ to consist only of measurable such functions.} usually finite dimensional, elements $\tilde{y}\in\Hc$ of which  are  called \textit{models},  and
	\item a cost function generator $c:\Hc\rightarrow \R^{\Xc\times\Yc}$ mapping a model $\tilde{y}$ to random variable $c_{\tilde{y}}:\Xc\times\Yc\rightarrow\R$, whose output $c_{\tilde{y}}(x,y)$ on input $(x,y)$ is a measure of fit between prediction $\tilde{y}(x)$ and label $y$. 
\end{enumerate}
\begin{notation}
In the subsequent, we consolidate notation with  $\Zc\defeq \Xc\times\Yc$ and $z=(x,y)\in \Zc$. 	
\end{notation}

The goal is to concoct an \textit{algorithm} $\hy_{(\cdot)}:\Zc^\omega \rightarrow \Hc$ which outputs a model $\hy_{\sF{S}}$ with small expected cost $$\Eb(c_{\hy_{\sF{S}}}) \approx \disinf_{\tilde{y}\in \Hc}\Eb(c_{\tilde{y}}),$$ having some guarantees of approximation performance in probability. The measure $\Pb_{\Xc\times\Yc}$ generating data $(x_i,y_i)$  is usually unknown, and data $\sF{S}$ is used to proxy approximate expectation and to optimize the expected risk function \begin{equation}
  \begin{array}{lrcl} \Eb(c_{(\cdot)}):& \Hc &\rightarrow  & \R \\
  &\tilde{y}&\mapsto & \Eb(c_{\tilde{y}}). 	
  \end{array}\end{equation} 
The standard algorithm for  this optimization is empirical risk minimization, namely \begin{equation}\label{eq:erm} \hy_{\sF{S}}\in \displaystyle\arg\min_{\tilde{y}\in \Hc}e_{\sF{S}}(c_{\tilde{y}}),\end{equation} where empirical risk is defined as \begin{equation}\label{eq:empiricalRisk} e_{\sF{S}}(c_{\tilde{y}})\defeq \frac{1}{|\sF{S}|}\diss_{(x,y)\in \sF{S}}c_{\tilde{y}}(x,y).\footnote{When cost $c$ is fixed and clear, we also denote $e_{\sF{S}}(c_{\tilde{y}})$ simply by $e_{\sF{S}}(\tilde{y})$.}\end{equation} Law of Large Numbers intuition suggests that \begin{equation}\label{eq:lln1} e_{\sF{S}}(c_{\tilde{y}})\approx \Eb(c_{\tilde{y}})\end{equation} when $|\sF{S}|$ is large, so supposing as much, an output $\hy_{\sF{S}}$ of \cref{eq:erm} ought to be a close approximation of the true goal, in the sense that \begin{equation}\label{eq:generalization}\Eb(c_{\hy_{\sF{S}}})\approx \dis\inf_{\tilde{y}\in \Hc}\Eb(c_{\tilde{y}}).\end{equation}

To the extent that a model $\tilde{y}\in \Hc$ (approximately) satisfies approximation \eqref{eq:lln1}, we say that the model \textit{generalizes} ($\e$-generalizes if the error in approximation is bounded by $\e$), and to the extent that models in $\Hc$ can be guaranteed to generalize optimality $\inf_{\tilde{y}}\Eb(c_{\tilde{y}})$ \eqref{eq:generalization}, we say that $\Hc$ is some kind of \textit{learnable}. The familiar and formal notion of \textit{probably approximately correct} (PAC) learnability, for example, extends guarantees of concentration bounds to an  optimization (over $\Hc$) context, and defines $\Hc$ to be PAC learnable if there is a sample complexity $\mu:(0,1)^2\rightarrow\N$ for which $\hy_{\sF{S}}$ may be guaranteed to $\e$-generalize  with at least $1-\d$ probability as long as $|\sF{S}|>\mu(\e,\d)$ (\cite{uml}, \cite{fml}).\footnote{Explicitly, if $m>\mu(\e,\d)$ then $\Pb_{(\Xc\times\Yc)^m}\left(\big|\Eb(c_{\hy_{(\cdot)}}) - \inf_{\tilde{y}\in \Hc}\Eb(c_{\tilde{y}})\big|>\e\right)<\d $. Strictly speaking, PAC learnability only requires the existence of an algorithm $\hy:(\Xc\times\Yc)^\omega\rightarrow\Hc$ satisfying this bound, not necessarily that empirical risk minimization is it.} Properly quantifying the character  and richness of $\Hc$ (as captured, e.g., by VC dimension) demarcates learnability conditions, and various theoretical results exist providing such guarantees.

%%%subsection
\subsection{Overfitting and Generalization}\label{subsection:overfitting}
Absent formal learnability guarantees, it turns out that LLN reasoning is not sufficient for ensuring generalization. The reasons are multifarious but substantively turn around \textit{currying} (\cite[\S2.3]{riehl}) of the cost function generator $c:\Hc\rightarrow\R^{\Zc}$.  Currying  amounts to pre-fixing arguments of a multivariable function to generate a function of fewer variables, and casting our situation in this formalism is helpful for understanding the problem. For a \textit{fixed} model $\tilde{y}\in \Hc$, the map $c_{\tilde{y}}:\Zc\rightarrow \R$ is a random variable, and induces a measure $\Pb_\R$ on $\R$ by $\Pb_\R([a,b])\defeq \Pb_{\Xc\times\Yc}\big(c_{\tilde{y}}^{-1}([a,b])\big)$. This means, among other things,  given data $\sF{S}=\big(z_1,\ldots,z_m)\big)\sim_{iid}\Pb_{\Zc}$, that  $c_{\tilde{y}}(\sF{S})\defeq\big(c_{\tilde{y}}(z_1),\ldots,c_{\tilde{y}}(z_m)\big)\sim_{iid}\Pb_\R$. And  independence invites valid conclusions of various concentration results. 
  
\begin{figure}[h!]
\centerline{\includegraphics[scale=.2]{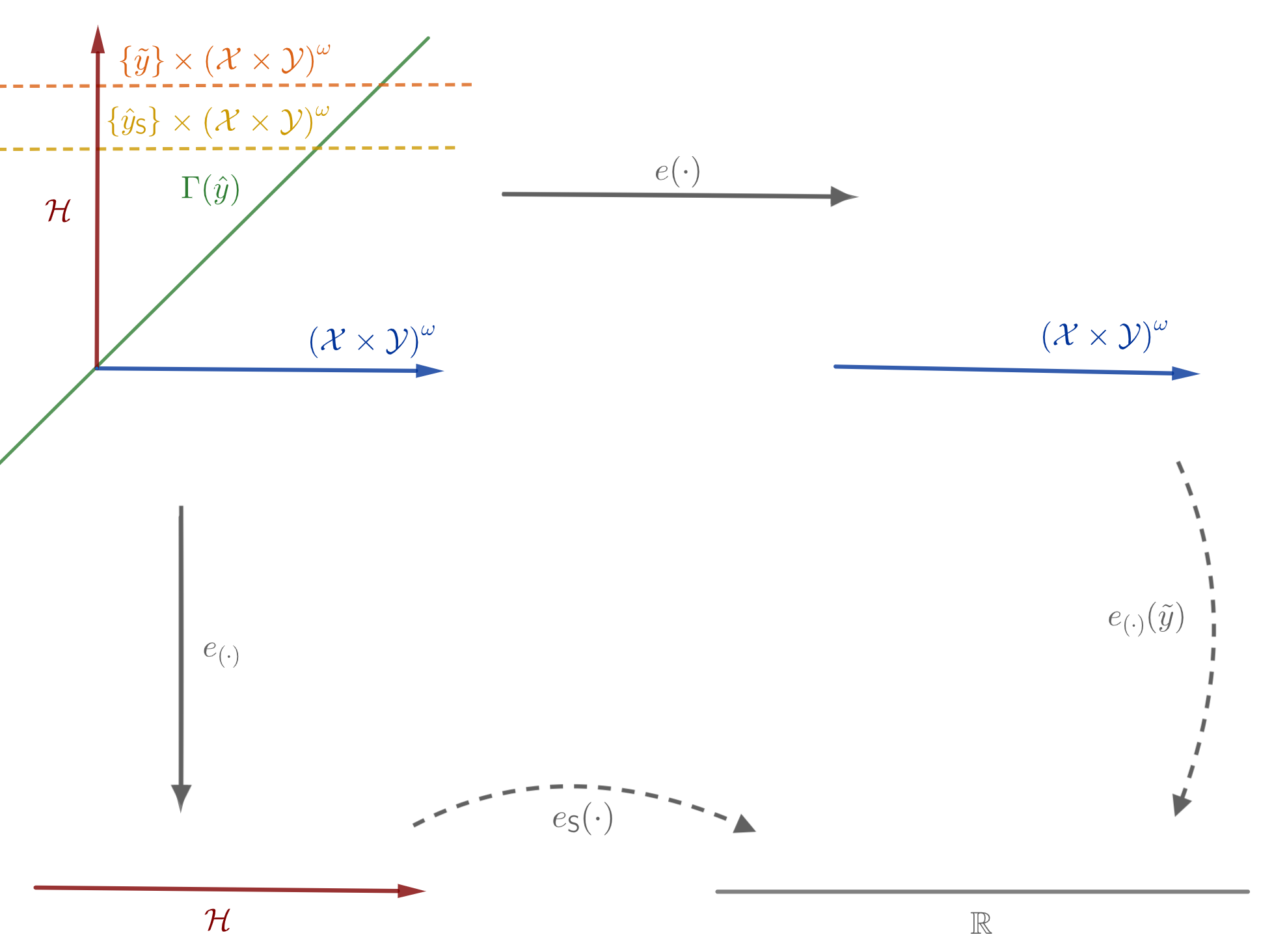}}\caption{Slicing the Cost Function $c:\Hc\times(\Xc\times\Yc)^\omega\rightarrow\R$}\label{fig:figure1}
\end{figure}

Searching over a function space, in the supervised machine learning setting, adds complications to otherwise innocuous independence conclusions. For the learning algorithm $\hy:\Zc^\omega\rightarrow\Hc$ first takes \textit{data} $\sF{S}\in \Zc^\omega$ in search of a certain minimum with respect to \textit{this} data. Given different data, the algorithm outputs a different model. The curried cost generator, by contrast, $c_{(\cdot)}:\Hc\rightarrow \R^{\Zc}$ defines an empirical risk generator $e(\cdot):\Hc\rightarrow \R^{(\Zc)^\omega}$ defined by sending $\tilde{y}\mapsto e_{(\cdot)}(c_{\tilde{y}})$, the latter of which is defined by mapping data $\sF{S}\in (\Zc)^\omega$ to $e_{\sF{S}}(c_{\tilde{y}})$  (\cref{eq:empiricalRisk}), and with respect to which LLN reasoning and the like may properly apply. The learning optimization procedure, however, flips the currying around: fixing data $(x,y)\in \Xc\times\Yc$, we have a cost on models $c_{(\cdot)}(x,y):\Hc\rightarrow\R$ defined by $\tilde{y}\mapsto c_{\tilde{y}}(x,y)$, which extends to empirical risk $e_{\sF{S}}(\cdot):\Hc\rightarrow\R$ mapping model $\tilde{y}\mapsto e_{\sF{S}}(c_{\tilde{y}})$, instantiating the curried function $e_{(\cdot)}:\Zc^\omega\rightarrow\R^\Hc$.\footnote{The reversal of roles in subscripts between $c$ and $e$ is unfortunate, but otherwise reflective of the primary purpose of each function, namely that $c_{\tilde{y}}$ measures performance of model $\tilde{y}$ on a datapoint $z\in \Zc$ while $e_{\sF{S}}$ measures empirical risk of fixed data on a model $\tilde{y}\in \Hc$.}

Order of operations matter.  Consider  uncurried versions $\begin{tikzcd}\Hc\times\Zc^\omega\arrow[r,shift left,"c_{(\cdot)}(\cdot)"]\arrow[r,shift right,swap,"e_{(\cdot)}(c_{(\cdot))}"] & \R \end{tikzcd}$  of the cost and empirical risk functions---thereby deprioritizing either data $\sF{S}\in\Zc^\omega$ or model $\tilde{y}\in \Hc$---and define  $$\G(\hy)\defeq \big\{(\tilde{y},\sF{S})\in \Hc\times\Zc^\omega:\, \tilde{y} = \hy_{\sF{S}}\big\},$$ the pullback of diagram $\begin{tikzcd}\Hc\times\Zc^\omega\arrow[r,shift left,"\hy_{(\cdot)}\circ \pi_2"]\arrow[r,shift right,swap,"id_\Hc\circ \pi_1"] & \Hc \end{tikzcd}$ (see \cref{fig:figure1}). Evaluation of  empirical performance for a model $\hy_{\sF{S}}$ using training data $\sF{S}$ lives in  $\G(\hy)$, which elucidates why performance of training data is ``aspect blind'' to overfitting: Law of Large Numbers reasoning does not apply in this regime.

To see why not, consider that a sequence of datasets $$\begin{array}{lcl}\sF{S_1}  & = & z_1\in\Zc^1,\\  \sF{S_2} & = &\big(z_1,z_2\big)\in \Zc^2,\\  & \vdots &  \\ 
\sF{S}_m& = &\big(z_1,\ldots,z_m\big)\in\Zc^m,\\  & \vdots & \end{array}$$ with each $\sF{S}_j\sim_{iid}\Pb_{\Zc}$, induces a sequence of models $$\hy_{\sF{S}_1}, \hy_{\sF{S}_2},\ldots,\hy_{\sF{S}_m},\ldots \in \Hc.$$ The sequence of models consequently induces a sequence of finite sequences of costs \begin{equation}\label{eq:seqCosts}\begin{array}{lcl} c_{\hy_{\sF{S}_1}}(\sF{S}_1)& = & c_{\hy_{\sF{S}_1}}(z_1)\in \R, \\ c_{\hy_{\sF{S}_2}}(\sF{S}_2)& = & \big(c_{\hy_{\sF{S}_2}}(z_1),c_{\hy_{\sF{S}_2}}(z_2)\big)\in\R^2,\\ & \vdots & \\ c_{\hy_{\sF{S}_m}}(\sF{S}_m) & = & \big(c_{\hy_{\sF{S}_m}}(z_1),\ldots,c_{\hy_{\sF{S}_m}}(z_m)\big)\in \R^m,\\ & \vdots & \end{array}\end{equation} which clearly is  not guaranteed to be iid, unless miraculously the cost functions \begin{equation}\label{eq:seqCostsNIID} c_{\hy_{\sF{S}_1}}, c_{\hy_{\sF{S}_2}}, \ldots, c_{\hy_{\sF{S}_m}},\ldots\end{equation} all induce the same measure, say $\Pb_{c_{\hy}(\Xc\times\Yc)}$, on $\R$, for which there is no apriori reason to suppose; for different models $\tilde{y}'\neq \tilde{y}\in \Hc$ are not guaranteed to induce the same cost functions $c_{\tilde{y}'}$, $c_{\tilde{y}}$. 

There is, however, a sense in which sequence of costs $c_{\sF{S}_1},\ldots,c_{\sF{S}_k}$, for models $\hy_{\sF{S}_1},\ldots,\hy_{\sF{S}_k}\in\Hc$, \textit{is} iid, though care must be taken to identify the correct measure for which this is so. The restrictions $\hy|_{\Zc^j}:\Zc^j\rightarrow \Hc$ induce a collection of measures $\Pb_\Hc^j$ on $\Hc$  for $j\in \N$. So for \textit{fixed} $m\in\N$ and iid sequence $\sF{S}_1,\ldots,\sF{S}_k\sim_{iid}\Pb_{\Zc^m}$, we then have iid sequence $\hy_{\sF{S_1}},\ldots,\hy_{\sF{S_k}}\sim_{iid}\Pb_\Hc$  and therefore too iid sequence \begin{equation}\label{eq:seqCostsIID} c_{\hy_{\sF{S}_1}},\ldots,c_{\hy_{\sF{S_k}}}\sim_{iid} \Pb_{\R}\end{equation} of random variables. In \eqref{eq:seqCostsNIID}, we consider each $c_{\hy_{\sF{S_j}}}:\Zc\rightarrow\R$ as a different random variable, and expect, among other things, that $\Eb(c_{\hy_{\sF{S_j}}})\neq \Eb(c_{\hy_{\sF{S_i}}})$ for $i\neq j$; iidness in \eqref{eq:seqCostsIID} by contrast implies equality $\Eb(c_{\hy_{\sF{S_j}}})= \Eb(c_{\hy_{\sF{S_i}}})$ even when $i\neq j$. But in the latter case  we measure and integrate with respect to $\Pb_{\Zc^m}$, and in the former case with respect to $\Pb_{\Zc}$, with model $\hy_{\sF{S}}\in\Hc$ fixed.\footnote{The expectation $\disg_{\Xc\times\Yc}c_{\hy_{\sF{S}}}(x,y)d\Pb_{\Zc}(z)$ is a number while the expectation $\disg_{\Zc^m}c_{\hy_{\sF{S}}}d\Pb_{\Zc^m}(\sF{S})$ is itself a random variable.} Said once more,  realizations of $\hy_{\sF{S}}$ for data set $\sF{S}$ may be dually interpreted as instantiating a new random variable $\Zc\xrightarrow{c_{\hy_{\sF{S}}}}\R$---for which different samplings realize different random variables---or as instancing particular outcomes of the same random function $\Zc^m\xrightarrow{c_{\hy_{(\cdot)}}}\Hc$.

 \begin{figure}[h!]
\centerline{\includegraphics[scale=.175]{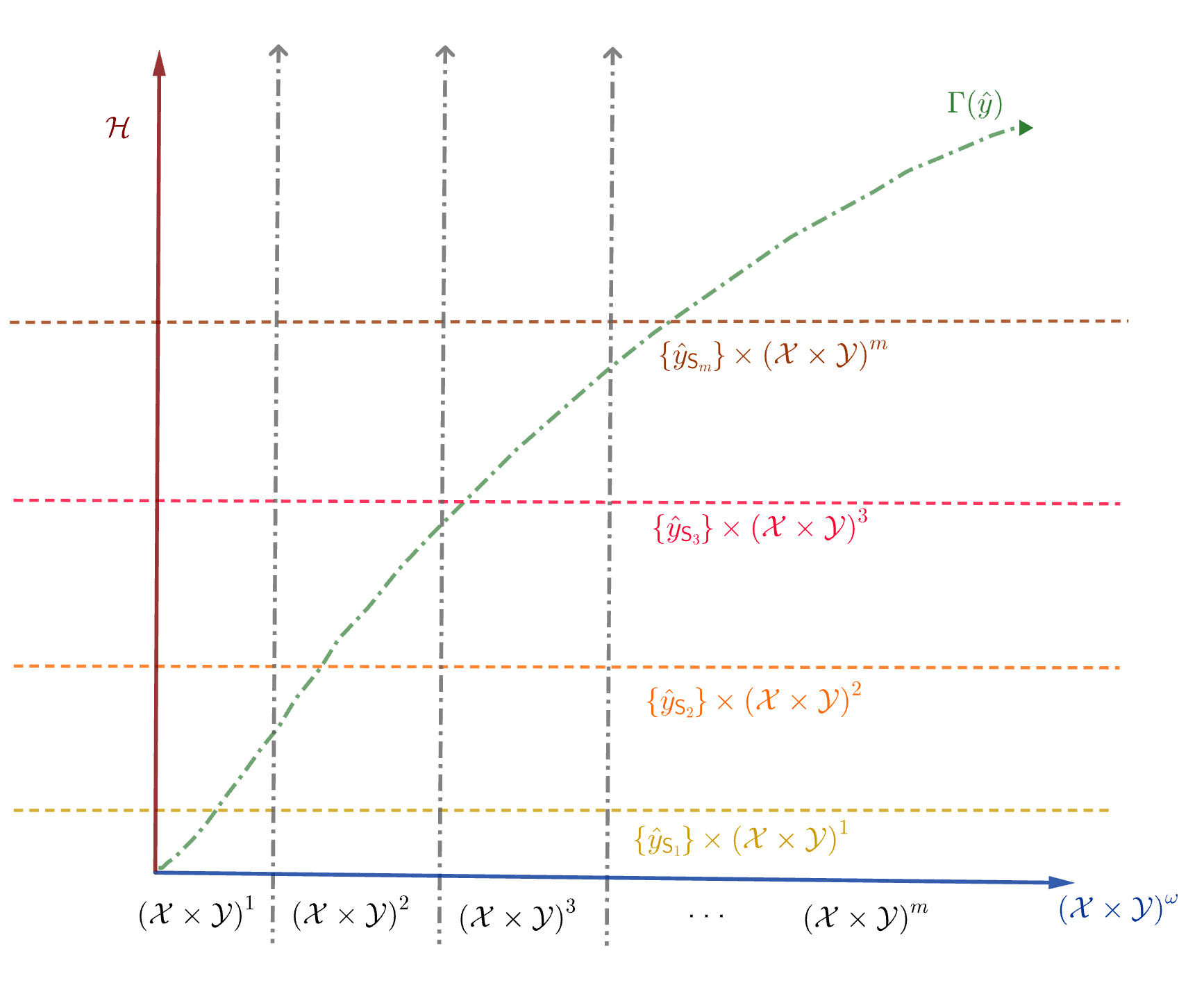}}\caption{Sequences of models $\hy_{\sF{S_1}},\hy_{\sF{S_2}},\ldots,\hy_{\sF{S_m}},\ldots$} \label{fig:figure2}
\end{figure}

To sort resolve the conflict, consider the map $(\Xc\times\Yc)^{m+m'}\xrightarrow{c_{\hy_{(\cdot)}(\cdot)}}\R^{m'}$ defined by sending \begin{equation}\label{eq:doubleCost} (\sF{S},\sF{S}')\mapsto c_{\hy_{\sF{S}}}(\sF{S}')\defeq \big(c_{\hy_{\sF{S}}}(z_1'),\ldots,c_{\hy_{\sF{S}}}(z_{m'}')\big)\in \R^k\end{equation} with $\sF{S}'=  \big(z_1',\ldots,z_{m'}'\big)$.  Independence of evaluated cost samples in \eqref{eq:seqCosts} may then be recovered by first fixing model $\hy_{\sF{S}}\in \Hc$ and separately evaluating  $$\big(c_{\hy_{\sF{S}}}(z_1'),\ldots,c_{\hy_{\sF{S}}}(x_{m'}',y_{m'}')\big)\in \R^{m'},$$ which corresponds to the evaluating the curryied map $c_{\hy_{\sF{S}}}(\cdot):\Zc^{m'}\rightarrow \R^{m'}$. 
 In this way, we isolate our attention to slices  $\{\hy_{\sF{S}}\}\times \Zc^\omega$ (see \cref{fig:figure2}), for which a sequence of  samples $\sF{S}'\sim_{iid}\Pb_{\Zc}$ induces truly independent and identically distributed sample $c_{\hy_{\sF{S}}}(\sF{S}')\sim_{iid}\Pb_{c_{\hy_{\sF{S}}}(\Zc)}$. 
Then  for each such $\hy_{\sF{S}_j}$ above, LLN holds in the sense that there is, for $\e,\d>0$, a number $m'>0$ for which \begin{equation}\label{eq:validation}\Pb_{\Zc^{m'}}\left(\big|e_{(\cdot)}(\hy_{\sF{S}_j}) - \Eb(c_{\hy_{\sF{S}_j}})\big|>\e\right)<\d\end{equation} whenever $m>m_j$.  Now the problem of performance evaluation  with $\sF{S}$  from \eqref{eq:seqCosts} crystalizes: with composed map \begin{equation}\label{eq:curriedComposedCost} \Zc^m\hookrightarrow\Zc^{2m}\xrightarrow{c_{\hy_{(\cdot)}}(\cdot)}\R^m\end{equation} taking $\sF{S}\mapsto c_{\hy_{\sF{S}}}(\sF{S})$, the induced measure on $\R^m$ arises from (the corresponding portion of) a map $\G(\hy)\rightarrow\R^m$, instead of the slice $\{\hy_{\sF{S}}\}\times\Zc^m$ in \cref{fig:figure2}. It is simply a different measure!

This discussion formalizes uses of data for training, validation, and testing or evaluation. The inequality in \eqref{eq:validation} is common (see e.g.\ \cite[Theorem 11.1]{uml}) and provides a test for quantifying a model's generalization performance. The second data set $\sF{S}'$ is commonly called `validation' or `test' data, according to its end in the training pipeline.  Typically performance at each training stage is evaluated on the holdout set, and early stopping conditions verify that validation performance continues to improve \cite{prechelt}. An onset of  validation performance degradation can be interpreted as indication of overfitting.  Illustrations of overfitting in the literature (e.g. \cite{bilmes}, \cite{ghojogh}, \cite{o2o}) display performance on training data compared with performance on holdout data, often parameterized by model complexity or training step (\cite{roelofs2}, \cite{roelofs}).  Caution must be taken when tuning or other decisioning is guided by performance on validatation data: model hyperparameters are often obtained by minimizing $e_{\sF{S}'}(\hy_{\sF{S}})$ with respect to them over $\sF{S}'$. Doing so, however, lands one in exactly the same situation as described previously of measuring along $\G(\hy)$ instead of $\{\tilde{y}\}\times\Zc^x$. From the measure's perspective, therefore, there is little difference between training with e.g.\ gradient descent and updating hyperparameters with cross-validation: to obtain an honest iid sequence of cost samples, one once and for all fixes the model with respect to which empirical performance is to be evaluated.

\section{Quantifying Overfitting}\label{section:theTest}
\subsection{Empirical Risk Estimation} 
We consider  only the case where cost $c_{(\cdot)}(\cdot):\Hc\times\Zc\rightarrow\R$ is bounded as  $c_{\tilde{y}} \subset [0,1]$, such as most classification problems or  restricted classes of regression problems. In this case, Hoeffding-like bounds abound and we expect that \begin{equation}\label{eq:expectedApprox}
 \Pb_{\Zc^{m'}}\left(\left|\Eb(c_{\hy_{\sF{S}}}) - e_{(\cdot)}(\hy_{\sF{S}})\right|>\e \right) < 2e^{-2\e^2m'}.
 \end{equation}
 
In other words, for  independently and identically distributed sampled data $\sF{S}'\in \Zc^{m'}$, $e_{\sF{S}'}(\hy_{\sF{S}}) \approx \Eb(c_{\hy_{\sF{S}}})$ ($\pm \e$) with probability  at least $1-e^{-2\e^2{m'}}$.\footnote{The fact that $\Eb(c_{(\cdot)})$ and $e(\cdot)$ both take $\hy_{\sF{S}}$ as argument is irrelevant: the bound holds for any $\tilde{y}\in \Hc$.} While $\sF{S}\in(\Xc\times\Yc)^m$ is also drawn independently, by assumption, we cannot quite conclude the same of $e_{\sF{S}}(c_{\hy_{\sF{S}}})$ because (as discussed above) with respect to the $c_{\hy_{\sF{S}}}$-induced measure on $\R$, the  sequence $(c_{\hy_{\sF{S}}}(z_1),\ldots,c_{\hy_{\sF{S}}}(z_m))$ is not. We may, however, suppose that a \textit{consequence} of independence holds, namely that \begin{equation}\label{eq:approxEquality} |\Eb(c_{\hy_{\sF{S}}}) - e_{\sF{S}}(\hy_{\sF{S}})|< \e/2,\end{equation} and use this (possibly counterfactual) supposition to test its truth. While possibly counterintuitive, a bound of the form in \eqref{eq:approxEquality} is exactly what we desire from a generalizing model $\hy_{\sF{S}}$. 

We first collect some definitions.
\begin{definition}\label{def:overfitting}
	Let $\sF{S}\sim_{iid}\Pb_{\Xc\times\Yc}$ and $\hy_{\sF{S}}\in \Hc$. We say that $\hy_{\sF{S}}$ $\e$-\textit{overfits} $\sF{S}$ if $$e_{\sF{S}}(\hy_{\sF{S}}) < \Eb(c_{\hy_{\sF{S}}}) - \e.$$

	\end{definition} This definition may be extended to $\e$-\textit{underfitting} with inequality $e_{\sF{S}}(\hy_{\sF{S}})>\Eb(c_{\hy_{\sF{S}}}) +\e$ and $\e$-generalization if model $\hy_{\sF{S}}$ neither $\e$-overfits nor $\e$-underfits. 
	
	Irrespective of threshold value $\e$, we call $\e_\Hc(\sF{S})\defeq e_{\sF{S}}(\hy_{\sF{S}}) - \Eb(c_{\hy_{\sF{S}}})$ the \textit{overfitting margin} of $\sF{S}$ and $\mu_\Hc\defeq \disg_{\Zc^m}\e_\Hc(\sF{S})d\Pb_{\Zc^m}$ the \textit{mean overfitting margin}. 
\begin{definition}\label{def:overfittingMargin}
	Irrespective of threshold value $\e$, we call $\e_\Hc(\sF{S})\defeq |e_{\sF{S}}(\hy_{\sF{S}}) - \Eb(c_{\hy_{\sF{S}}})|$ the \textit{overfitting margin} of $\sF{S}$ and $\mu_\Hc\defeq \disg_{\Zc^m}\e_\Hc(\sF{S})d\Pb_{\Zc^m}(\sF{S})$ the \textit{mean overfitting margin}.  We also define \textit{empirical [mean] overfitting margin} by $\e_\Hc(\sF{S},\sF{S}')\defeq \big|e_{\sF{S}}(\hy_{\sF{S}}) - e_{\sF{S}'}(\hy_{\sF{S}})\big|$. 
\end{definition}
Notice that the absolute value in \cref{def:overfittingMargin} subsumes both overfitting and underfitting as defined in \cref{def:overfitting}. The  phenomenon of underfitting as typically understood is not quite represented by these definitions: a class is ordinarily said to underfit if something the epsilon in $\Eb(c_{\hy_{\sF{S}}}) = \inf_{\tilde{y}\in \Hc}\Eb(c_{\hy_{\tilde{y}}}) + \e$ is large. 

\begin{prop}[Test for Overfitting]\label{prop:theProp}
	Suppose that  model $\hy_{\sF{S}}$ $\e/2$-generalizes (\cref{eq:approxEquality}). Then \begin{equation}\label{eq:testInequality}
 \Pb_{\Zc^{m'}}\left(\big|e_{\sF{S}}(\hy_{\sF{S}}) - e_{\sF{S}'}(\hy_{\sF{S}})\big|> \e\right)\leq 2e^{-\frac{\e^2m'}{2}}.	
 \end{equation}

\end{prop}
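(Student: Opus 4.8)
The plan is to derive \eqref{eq:testInequality} from the Hoeffding-type bound \eqref{eq:expectedApprox} by a single triangle inequality, using the hypothesis \eqref{eq:approxEquality} to absorb one of the two terms that appear. The whole content of the proposition is in the setup discussed in \cref{subsection:overfitting}, not in the estimate itself, so I expect the argument to be short.

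First I would fix the training set $\sF{S}\in\Zc^m$ once and for all, so that $\hy_{\sF{S}}\in\Hc$ is a \emph{fixed} model and $e_{\sF{S}}(\hy_{\sF{S}})$ a fixed number; this is precisely the move that lets us apply a concentration result to the fresh sample $\sF{S}'\sim_{iid}\Pb_{\Zc^{m'}}$, with $c_{\hy_{\sF{S}}}:\Zc\to\R$ in the role of the fixed $[0,1]$-valued random variable. Instantiating \eqref{eq:expectedApprox} at threshold $\e/2$ then gives
$$\Pb_{\Zc^{m'}}\left(\left|\Eb(c_{\hy_{\sF{S}}}) - e_{\sF{S}'}(\hy_{\sF{S}})\right| > \e/2\right) < 2e^{-2(\e/2)^2 m'} = 2e^{-\e^2 m'/2}.$$
Next I would split the empirical overfitting margin by the triangle inequality,
$$\big|e_{\sF{S}}(\hy_{\sF{S}}) - e_{\sF{S}'}(\hy_{\sF{S}})\big| \leq \big|e_{\sF{S}}(\hy_{\sF{S}}) - \Eb(c_{\hy_{\sF{S}}})\big| + \big|\Eb(c_{\hy_{\sF{S}}}) - e_{\sF{S}'}(\hy_{\sF{S}})\big|,$$
and observe that by the hypothesis \eqref{eq:approxEquality} the first summand is strictly less than $\e/2$ — a deterministic fact about the fixed $\sF{S}$, not a probabilistic one. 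Hence on the event $\{\,\big|e_{\sF{S}}(\hy_{\sF{S}}) - e_{\sF{S}'}(\hy_{\sF{S}})\big|>\e\,\}$ the second summand must exceed $\e/2$; i.e.\ that event is contained in $\{\,\big|\Eb(c_{\hy_{\sF{S}}}) - e_{\sF{S}'}(\hy_{\sF{S}})\big|>\e/2\,\}$. Monotonicity of $\Pb_{\Zc^{m'}}$ and the displayed bound then give \eqref{eq:testInequality}.

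There is no genuinely hard step; the only place the argument can go wrong if stated carelessly is bookkeeping about what is random. The probability is over the second sample $\sF{S}'$ while $\sF{S}$ (and therefore $\hy_{\sF{S}}$ and the scalar $e_{\sF{S}}(\hy_{\sF{S}})$) is held fixed, and \eqref{eq:approxEquality} is invoked as a hypothesis \emph{about that fixed} $\sF{S}$ — the hypothesis whose truth the test is ultimately meant to probe. If one instead wanted a bound uniform over a random draw of $\sF{S}$, the first summand would no longer be controlled deterministically and one would need a separate concentration argument for $e_{\sF{S}}(\hy_{\sF{S}})$ (or a union bound), which is exactly the situation the preceding section argues is unavailable; but that is not what is claimed here.
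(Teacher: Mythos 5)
Your argument is correct and is exactly the one the paper intends: the paper states the proposition without an explicit proof, but the machinery it sets up in \eqref{eq:expectedApprox} and \eqref{eq:approxEquality} points precisely to your triangle-inequality decomposition, the deterministic absorption of the $|e_{\sF{S}}(\hy_{\sF{S}})-\Eb(c_{\hy_{\sF{S}}})|<\e/2$ term via the generalization hypothesis, and Hoeffding at threshold $\e/2$ giving $2e^{-2(\e/2)^2m'}=2e^{-\e^2m'/2}$. Your closing remark about what is held fixed versus what is random is also the correct reading of the proposition's scope.
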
  
Therefore, the null hypothesis that trained model $\hy_{\sF{S}}$ $\frac{\e}{2}$-generalizes may be tested using probability bound \cref{eq:testInequality}. 
 See \cite[\S11]{uml} for a similar result.

Notice that use of holdout data for evaluation by itself provides an absolute approximation of performance, while in tandem with training data, we gain quantified (un)certainty specifically about generalization. Finally, the probability in \eqref{eq:testInequality} depends on the size of validation data, but not on the size of training data. This conclusion is correct: while we would like more training data to correlate with higher likelihood of  performance, the problem in \cref{subsection:overfitting} indicates that such intuition may not find a straightforward grounding in   probability. Presumably, one may be less inclined to hypothesize satisfactory model performance when training with little data. The intuition finds security in PAC learnability, absent which there is no obvious guaranteed connection between size of (training) data and performance; we discuss this issue further in \cref{subsection:pLearnability}.

\subsection{Estimating for Overfitting}

 \begin{figure}[h!]
\centerline{\includegraphics[scale=.15]{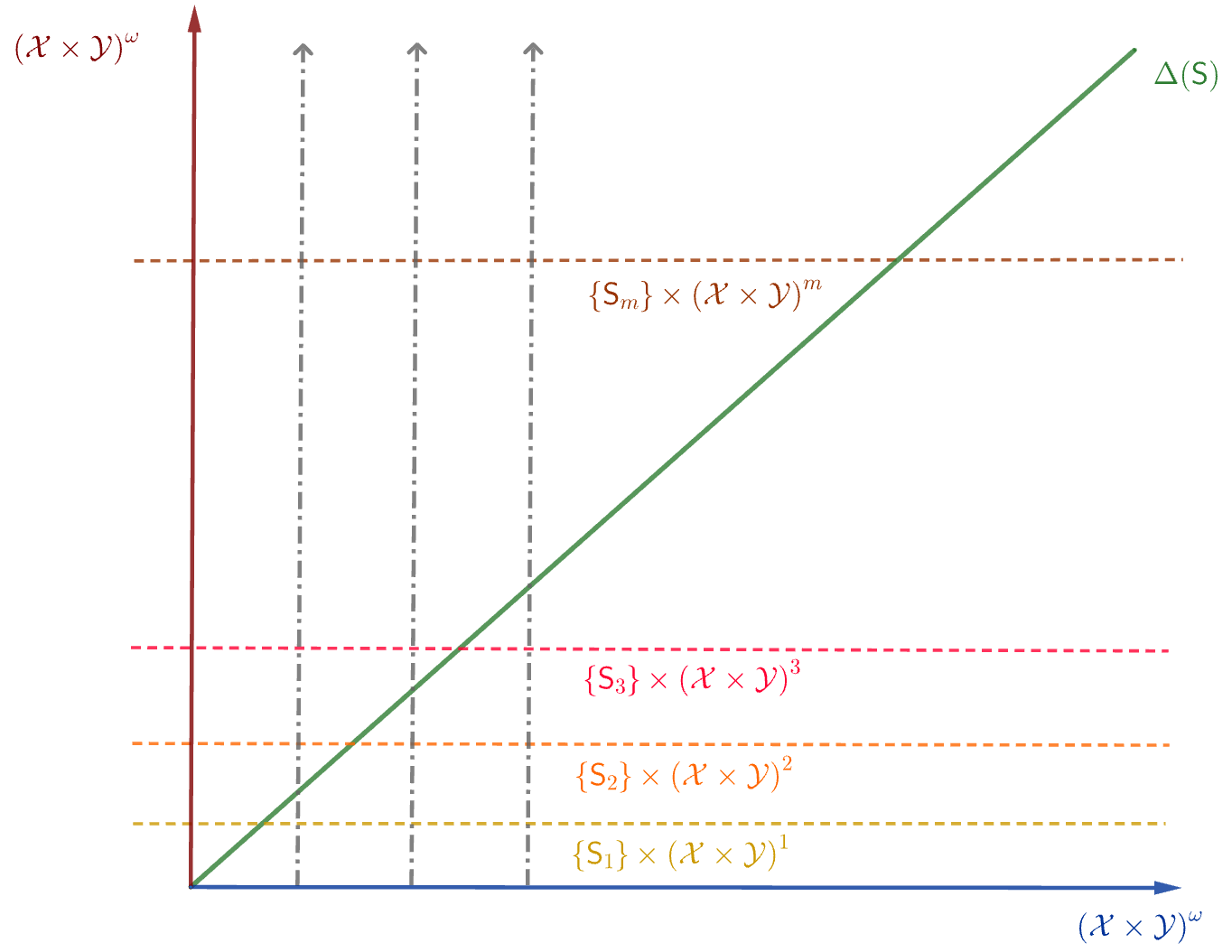}}\caption{Geometry for map $e_{(\cdot)}(\cdot):(\Xc\times\Yc)^{2\omega}\rightarrow\R$} \label{fig:figure3}
\end{figure}

With this review in hand, recall overfitting margin  $\e_{\Hc}(\sF{S}) = \big| \Eb(c_{\hy_{\sF{S}}}) - e_{\sF{S}}(\hy_{\sF{S}}) \big|$  and the empirical overfitting margin $\e_\Hc(\sF{S,S'}) =\big|e_{\sF{S}}(\hy_{\sF{S}}) - e_{\sF{S}'}(\hy_{\sF{S}})\big|$(\cref{def:overfittingMargin}). 
Typical concentration bounds guarantee with high probability that $\left|\mu_\Hc - \frac{1}{k}\diss_{j=1}^k \e_\Hc(\sF{S}_j)\right| < \g$ for $\g>0$, thus leading to a test which we can use for estimating the overfitting margin $\m_\Hc$ (\cref{def:overfitting}).
For fixed $\sF{S}$, we expect $\e_\Hc(\sF{S,S'})$ to be close to $\e_\Hc(\sF{S})$ and therefore that the empirical mean of empirical overfitting margins $\frac{1}{k}\sum_{j=1}^k\e_\Hc(\sF{S}_j,\sF{S}')$ be near the overfitting margin $\m_\Hc$.  The next result quantifies how close. 

\begin{prop} Let $\sF{S}_1,\ldots,\sF{S}_k\sim_{iid}\Pb_{\Zc^m}$ and $\sF{S}'\sim \Pb_{\Zc^{m'}}$, and suppose that $m'>k+\frac{2\log(k/\d)}{\e^2}$.  Then 
\begin{equation}\label{eq:overfittingMarginEst}
\Pb_{\Zc^{km+m'}}\left(\left|\frac{1}{k}\diss_{j=1}^k\e_\Hc(\sF{S}_j,\sF{S}')-\mu_\Hc\right|>\e\right) \leq 4e^{-k\e^2/2}. 
\end{equation} 
\end{prop}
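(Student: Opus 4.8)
The plan is to split the deviation by the triangle inequality into an ``approximation'' piece governed by the holdout sample $\sF{S}'$ and a ``statistical'' piece that is an honest average of $k$ i.i.d.\ terms. Set $D_j\defeq\big|e_{\sF{S}'}(\hy_{\sF{S}_j})-\Eb(c_{\hy_{\sF{S}_j}})\big|$. Applying the reverse triangle inequality $\big||a-b|-|a-c|\big|\le|b-c|$ with $a=e_{\sF{S}_j}(\hy_{\sF{S}_j})$, $b=e_{\sF{S}'}(\hy_{\sF{S}_j})$, $c=\Eb(c_{\hy_{\sF{S}_j}})$ gives $\big|\e_\Hc(\sF{S}_j,\sF{S}')-\e_\Hc(\sF{S}_j)\big|\le D_j$, and therefore
\[
\left|\frac1k\diss_{j=1}^k\e_\Hc(\sF{S}_j,\sF{S}')-\mu_\Hc\right|\ \le\ \frac1k\diss_{j=1}^k D_j\ +\ \left|\frac1k\diss_{j=1}^k\e_\Hc(\sF{S}_j)-\mu_\Hc\right|.
\]
It then suffices to bound the two events $\big\{\tfrac1k\sum_j D_j>\e/2\big\}$ and $\big\{\big|\tfrac1k\sum_j\e_\Hc(\sF{S}_j)-\mu_\Hc\big|>\e/2\big\}$ by $2\d e^{-k\e^2/2}$ and $2e^{-k\e^2/2}$ respectively, and add.

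For the statistical piece: since each $\sF{S}_j\sim\Pb_{\Zc^m}$ independently and $\e_\Hc(\cdot)$ is a fixed measurable function of its argument (here $\Eb(c_{\hy_{\sF{S}_j}})$ is a number, not a random variable), the $\e_\Hc(\sF{S}_1),\dots,\e_\Hc(\sF{S}_k)$ are i.i.d.\ with mean $\mu_\Hc$; and because $c_{\tilde y}\subset[0,1]$, both $e_{\sF{S}_j}(\hy_{\sF{S}_j})$ and $\Eb(c_{\hy_{\sF{S}_j}})$ lie in $[0,1]$, so each $\e_\Hc(\sF{S}_j)$ lies in $[0,1]$. Hoeffding's inequality for an average of $k$ independent $[0,1]$-valued variables then yields $\Pb\big(\big|\tfrac1k\sum_j\e_\Hc(\sF{S}_j)-\mu_\Hc\big|>\e/2\big)\le 2e^{-2k(\e/2)^2}=2e^{-k\e^2/2}$.

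For the approximation piece: condition on $\sF{S}_j$, so that $\hy_{\sF{S}_j}$ is a \emph{fixed} model and $\sF{S}'\sim\Pb_{\Zc^{m'}}$ furnishes an honest i.i.d.\ sample on the slice $\{\hy_{\sF{S}_j}\}\times\Zc^{m'}$; then the Hoeffding bound \eqref{eq:expectedApprox}, applied at threshold $\e/2$, gives $\Pb_{\Zc^{m'}}(D_j>\e/2)\le 2e^{-\e^2 m'/2}$, and by the tower property the same bound holds unconditionally. Since the $D_j$ share the sample $\sF{S}'$ they are \emph{not} independent, so rather than concentrating their average directly I use $\tfrac1k\sum_j D_j\le\max_{1\le j\le k}D_j$ together with a union bound, $\Pb\big(\tfrac1k\sum_j D_j>\e/2\big)\le\Pb(\exists j:D_j>\e/2)\le 2k e^{-\e^2 m'/2}$. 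The hypothesis $m'>k+\frac{2\log(k/\d)}{\e^2}$ is exactly calibrated here: it gives $\e^2 m'/2>k\e^2/2+\log(k/\d)$, hence $2k e^{-\e^2 m'/2}<2\d e^{-k\e^2/2}$. Adding the two bounds and using $\d\le1$ gives $2e^{-k\e^2/2}+2\d e^{-k\e^2/2}\le 4e^{-k\e^2/2}$, as claimed.

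The two Hoeffding applications are routine; the step that needs the care the paper has been building up is the approximation piece — recognizing that conditioning on each $\sF{S}_j$ returns us to a fixed slice $\{\hy_{\sF{S}_j}\}\times\Zc^{m'}$ on which $\sF{S}'$ is genuinely i.i.d., so that \eqref{eq:expectedApprox} is legitimately invoked, and then that the shared randomness in $\sF{S}'$ must be absorbed by a union bound rather than a second concentration step. That union bound is the only place where $k$ re-enters through $m'$, and it is what forces the precise form $m'>k+\frac{2\log(k/\d)}{\e^2}$ of the sample-size condition.
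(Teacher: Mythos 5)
Your proof is correct and follows essentially the same route as the paper's: the same decomposition of $\big|\tfrac1k\sum_j\e_\Hc(\sF{S}_j,\sF{S}')-\mu_\Hc\big|$ into a holdout-approximation term controlled by a union bound over $j$ plus Hoeffding on the slice $\{\hy_{\sF{S}_j}\}\times\Zc^{m'}$, and an i.i.d.-average term controlled by Hoeffding directly, with the hypothesis on $m'$ absorbing the factor of $k$ in exactly the same way. Your write-up is in fact slightly cleaner (the reverse triangle inequality in place of the paper's ``add zero'' manipulation, and explicit attention to why the $D_j$ are not independent), but the argument is the same.
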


\begin{proof}
We add zero and use the triangle inequality: 
\begin{equation}
	\begin{array}{ll}\left|
	\frac{1}{k}\diss_{j=1}^k\e_\Hc(\sF{S}_j,\sF{S}')-\mu_\Hc\right| & 
	 = \left|\frac{1}{k}\diss_{j=1}^k\left|e_{\sF{S'}}(\hy_{\sF{S}_j}) -  \Eb(c_{\hy_{\sF{S}_j}})+\Eb(c_{\hy_{\sF{S}_j}})- e_{\sF{S}_j}(\hy_{\sF{S}_j})\right|-\mu_\Hc\right|\\
	& \leq   \left|\frac{1}{k}\diss_{j=1}^k\left(\left|e_{\sF{S'}}(\hy_{\sF{S}_j}) -  \Eb(c_{\hy_{\sF{S}_j}})\right|+\left|\Eb(c_{\hy_{\sF{S}_j}})- e_{\sF{S}_j}(\hy_{\sF{S}_j})\right|\right)-\mu_\Hc\right|\\ 
	& \leq      \frac{1}{k}\diss_{j=1}^k\left|e_{\sF{S}'}(\hy_{\sF{S}_j}) -  \Eb(c_{\hy_{\sF{S}_j}})\right|+\left|\diss_{j=1}^k\left|\Eb(c_{\hy_{\sF{S}_j}})- e_{\sF{S}_j}(\hy_{\sF{S}_j})\right|-\mu_\Hc\right|\\ 
	& =  \frac{1}{k}\diss_{j=1}^k\left|e_{\sF{S}'}(\hy_{\sF{S}_j}) -  \Eb(c_{\hy_{\sF{S}_j}})\right|+\left|\diss_{j=1}^k\e_\Hc(\sF{S}_j)-\mu_\Hc\right|\\ 
	\end{array}
\end{equation}	
Therefore, $$\begin{array}{ll}\left\{\left|\frac{1}{k}\diss_{j=1}^k\e_\Hc(\sF{S}_j,\sF{S}')-\mu_\Hc\right|>\e\right\} & \subseteq  \left\{\frac{1}{k}\diss_{j=1}^k\left|e_{\sF{S}'}(\hy_{\sF{S}_j}) -  \Eb(c_{\hy_{\sF{S}_j}})\right|>\e/2\right\}\cup \left\{\left|\diss_{j=1}^k\e_\Hc(\sF{S}_j)-\mu_\Hc\right|>\e/2\right\}
\\ 
& \subseteq \left\{\dissup_{j\leq k} \left|e_{\sF{S}'}(\hy_{\sF{S}_j}) -  \Eb(c_{\hy_{\sF{S}_j}})\right|>\e/2\right\}\cup \left\{\left|\diss_{j=1}^k\e_\Hc(\sF{S}_j)-\mu_\Hc\right|>\e/2\right\},
\end{array}
$$ so the left hand side of \eqref{eq:overfittingMarginEst} is bounded by $2ke^{-m'\e^2/2}+2e^{-k\e^2/2}$, using the union bound twice, and Hoeffding inequality for both terms on the right. When $m'>k+\frac{2\log(k/\d)}{\e^2}$, then $$2e^{-k\e^2/2}\left(ke^{-(m'-k)\e^2/2}+1\right) \leq 4 e^{-k\e^2/2}.$$ \end{proof}

\subsection{Interpreting the Output}\label{subsection:interpreting}
Overfitting is a heuristic notion which suggests a model has fit the data and not the distribution which generated it. On closer inspection, however, the test we propose does not provide indication of \textit{only} overfitting. In fact, the supposition of generalization is  one with respect to a certain (fixed) distribution; this test thus additionally assumes that the test data $\sF{S}'\sim_{iid}\Pb_{\Xc\times\Yc}$ as well. It may not. For there may be some form of distributional shift according to which $\sF{S}'\sim_{iid} \Pb'_{\Xc\times\Yc}$, in which case we cannot guarantee the bound in  \eqref{eq:testInequality}, at least not if the expectation $\Eb(c_{\hy_{\sF{S}}})$ is computed with respect to the original measure $d\Pb_{\Xc\times\Yc}$. In other words, instantiation  of   event $\left\{\big|e_{\sF{S}}(\hy_{\sF{S}})- e_{(\cdot)}(\hy_{\sF{S}})\big| > \e\right\}$  by inequality $|e_{\sF{S}}(\hy_{\sF{S}})- e_{\sF{S}'}(\hy_{\sF{S}})| > \e$  may suggest: 
\begin{enumerate}
	\item an unlikely sample  $\sF{S}'$  was received (all the hypotheses hold),
	\item $\hy_{\sF{S}}$ does not generalize $\Pb_{\Xc\times\Yc}$ with respect to $c_{(\cdot)}$ (overfitting), or
	\item $\sF{S}'\not\sim_{iid}\Pb_{\Xc\times\Yc}$ (possible distributional shift). 
\end{enumerate}
It is important when running a statistical test to respect the scope of what it purports to evaluate: namely, \textit{if} a set of assumptions hold---in this case  1.\ that $\hy_{\sF{S}}$ $\frac{\e}{2}$-generalizes (\cref{eq:approxEquality}) and 2.\ $\sF{S}'\sim_{iid} \Pb_{\Xc\times\Yc}$---then the probability that a certain kind of event occurs is bounded by some value which is explicitly calculable. Realization of the unlikely and unlucky event by $\sF{S}'$ can either mean $\sF{S}'$ really is unlucky or that one of the assumptions fails.

While this test is expressed with respect to the cost function $c_{\tilde{y}}$ or $c_{\hy_{\sF{S}}}$,  it need not be so limited. In fact, any map $f:\Xc\times\Yc\rightarrow\R$ may be used to probe the distribution, substituting the appropriate concentration inequality depending on the range of $f$. When $f(\Xc\times\Yc)$ is bounded, we may rely on a version of Hoeffding, which converges exponentially. Subsequent work will investigate the use of \textit{random projections} to examine distribution shift and uncertainty quantification, as a means of testing to eliminate or isolate the above obfuscating condition \#3.

\subsection{Loosening Uniform Bounds}\label{subsection:pLearnability}

\begin{figure}[h!]
\centerline{\includegraphics[scale=.75]{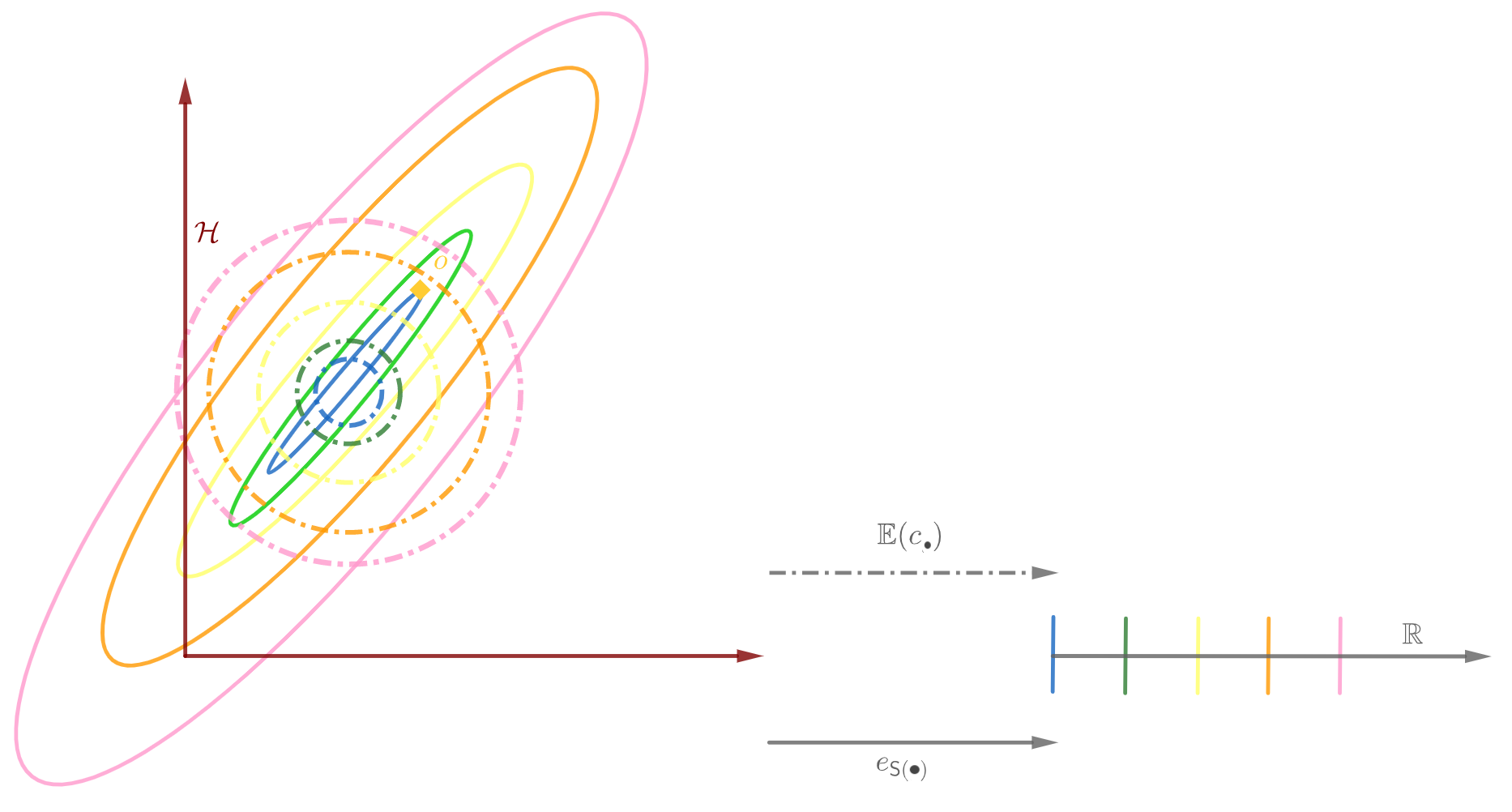}}\caption{Model $o$ overfits.}\label{fig:profiles}
\end{figure}
We conclude with commentary on the merits of this test. The bound in \cref{eq:testInequality} is perhaps unsurprising and at first glance offers little upside beyond the performance guarantee as provided by \cref{eq:expectedApprox}, which guarantees approximation of empirical mean using holdout data to the true mean. Indeed, on may stipulate that overfitting, in the sense of \cref{def:overfitting}, induces little cause for concern: as long as performance $\Eb(c_{\hy_{\sF{S}}})$ is ``good enough,'' (as approximated by $\e_{\sF{S}'}(\hy_{\sF{S}})$) it may not particularly matter whether or that training data performance matches a model's generalization performance. On the other hand, guarantees of the sort which PAC learnability provides ensure that the output of a training algorithm is near optimal in a hypothesis class. In the presence of overfitting, one may not know whether better than `good enough' is achievable. Generalization \textit{with training data} provides confidence that empirical risk minimization \eqref{eq:erm} approximately realizes risk minimization \eqref{eq:generalization} \textit{in the absence of uniform (PAC) guarantees}. The test is a workable mechanism for checking that there is little gap between performance a hypothesis class may achieve on data and on the data's distribution (\cref{fig:profiles}). 

We underscore the point. PAC guarantees ensure not only that an algorithm will return an optimal (in the hypothesis class) model, but that the sample complexity with respect to which the algorithm is expected to reliably work is \textit{independent of distribution}. Guarantees of this form are helpful in providing confidence ahead of time that the learning endeavor is not misguided. On the other hand, practitioners often engage in the tackling the learning problem irrespective of knowledge or other assurances that their class is PAC learnable. Moreover, PAC learnability does not cover the intermediate case that some distributions may require a larger sample complexity (some tasks are harder to learn than others), and that there is no uniform bound over all measures. Still, assurance that the output of training generalizes does not \textit{require} that the hypothesis class is PAC learnable, i.e.\ that uniform bounds hold. Rather: uniform bounds, when they exist, provide a conceptual framework and analytic setting wherein a class of results may be generated, in the absence of which, we would nevertheless like to  be able to say \textit{something}. 

Consider, for example, \cref{fig:profiles} which compares level sets for $\Eb(c_{(\cdot)})$ and $e_{\sF{S}}(\cdot)$. Learnability, as described by uniform convergence and notions  of representability (c.f.\ \cite[\S4.1]{uml}), guarantees that these profiles roughly track each other,\footnote{Again: and PAC learnability guarantees that the tracking is independent of distribution.} which is \textit{sufficient} for generalization of output model $\hy_{\sF{S}}$: if the value of $\Eb(c_{(\cdot)})$ and $e_{\sF{S}}(\cdot)$ are roughly approximate \textit{everywhere} in $\Hc$, then they certainly are at a particular point. On the other hand, learnability objectives ultimately desire generalization of the output, namely that $\Eb(c_{(\cdot)})$ and $e_{\sF{S}}(\cdot)$ are roughly approximate \textit{at} $\hy_{\sF{S}}$; how they compare in other regions of $\Hc$ is immaterial.

\appendix

\section{Hoeffding's Inequality for Statistical Hypothesis Testing}\label{appendix:Hoeffding}
Hoeffding's inequality gives a probability bound for independent sample $\sF{S}=(x_1,\ldots,x_m)\sim_{iid}\Pb_{\Xc}$ when $\Xc = [0,1]$, namely: 

\begin{equation}\label{eq:Hoeffding} \Pb_\Xc\left(\left|\disg_\Xc xd\Pb_\Xc(x)- \frac{1}{|\sF{S}|}\diss_{x\in \sF{S}}\right|\right)< 2e^{-2\e^2|\sF{S}|}.\end{equation}
Therefore, given any two of confidence specification $\d\in (0,1)$,  data set sized $|\sF{S}| = m$, and precision bound $\e\in (0,1)$, one may readily solve for the third. 

Proof of its verity and other applications may be found in various probability texts (\cite{ce}, \cite{uml}, \cite{fml}).

\bibliographystyle{ieeetr}
\bibliography{phdref}

%\addcontentsline{toc}{chapter}{Bibliography}

\end{document}